\let\OLDthebibliography\thebibliography
\renewcommand\thebibliography[1]{
  \OLDthebibliography{#1}
  \setlength{\parskip}{0pt}
  \setlength{\itemsep}{0pt plus 0.3ex}
}
\newtheorem{theorem}{Theorem}[section]
\newtheorem{lemma}[theorem]{Lemma}
\begin{document}\sloppy

\def\x{{\mathbf x}}
\def\L{{\cal L}}

\title{Noise Dimension of GAN: An Image Compression Perspective }
%
\name{Ziran Zhu$^{*1,2}$, Tongda Xu$^{*2}$, Ling Li$^{\dagger 1}$, Yan Wang$^{\dagger 2}$\thanks{$^*$Ziran Zhu and Tongda Xu contribute equally.$^\dagger$ To whom the correspondence should be addressed. This work is partially supported by the NSF of China (under Grant 92364202), National Science and Technology Major Project (2022ZD0115502) and Baidu Inc. through Apollo-AIR Joint Research Center.}}
\address{$^1$Institute of Software, Chinese Academy of Sciences \\ $^2$Institute for AI Industry Research (AIR), Tsinghua University}

\maketitle

\begin{abstract}
Generative adversial network (GAN) is a type of generative model that maps a high-dimensional noise to samples in target distribution. However, the dimension of noise required in GAN is not well understood. Previous approaches view GAN as a mapping from a continuous distribution to another continous distribution. In this paper, we propose to view GAN as a discrete sampler instead. From this perspective, we build a connection between the minimum noise required and the bits to losslessly compress the images. Furthermore, to understand the behaviour of GAN when noise dimension is limited, we propose divergence-entropy trade-off. This trade-off depicts the best divergence we can achieve when noise is limited. And as rate distortion trade-off, it can be numerically solved when source distribution is known. Finally, we verifies our theory with experiments on image generation.
\end{abstract}
\begin{keywords}
Image compression, generative adversial network
\end{keywords}
\section{Introduction }
\label{sec:intro}

Generative adversial network (GAN)\cite{Goodfellow2014GenerativeAN} is a type of generative models that maps a noise to target distribution. Previous works on the noise of GAN concludes that the noise helps to stabilize the training procedure \cite{Arjovsky2017TowardsPM}, generate high quality images \cite{Karras2018ASG} and increase the rank of neural network \cite{Feng2020UnderstandingNI}. Besides, the noise of GAN plays an important role for the samples quality \cite{Manisha2020EffectOI}. Most of those analysis treat GAN as a continuous mapping from natural image manifold to continuous natural image signals. This is reasonable as most GANs are trained with continuous Gaussian or Uniform noise. However, those works fail to answer a simple but fundamental question: what is the minimal noise dimension required for GAN?

If we view GAN as a mapping from a continuous distribution to another, this question is unreasonable. From the perspective of arithmetic coding \cite{Witten1987ArithmeticCF}, any data can be compressed into a single number with infinite precision. That is to say, a PNG codec with arithmetic coding is a GAN of noise dimension 1. However in practice, as shown in \cite{Manisha2020EffectOI} and our experiments, insufficient noise dimension does harm the sample quality.

In this paper, we propose to treat GAN as a discrete sampler instead of continuous mapping. From this perspective, we build a connection between the noise dimension required in GAN, and the bitrate to losslessly compress the data it models. More specifically, we prove that for floating point 32 GAN, the dimension of noise required for a GAN is at least $\mathcal{L} / 26.55$, where $\mathcal{L}$ is the minimal bits require to compress the image. To better understand the behaviour of GAN when noise is insufficient, we propose divergence-entropy function. In analogous to rate-distortion function, this function depicts the trade-off between how well the GAN fits and how many noise it requires. When the source distribution is available, we show that this function can be numerically solved by convex-concave programming. We provide a detailed example of divergence-entropy trade-off for a known source distribution. Moreover, we empirically show the existence of this trade-off by experimental results on image generation.

The contributions of our work can be summarized as:
\begin{itemize}
    \item 
    We propose to view GAN as a discrete sampler. From this perspective, we build a connection between the noise dimension in GAN and the bitrate to losslessly compress the source distribution it models.
    \item 
    To depict the behaviour of GAN with insufficient noise, we propose the divergence-entropy trade-off. We show that the divergence-entropy trade-off can be numerically solved when the source distribution is known, and we give an example of it.
    \item 
    We verify the divergence-entropy trade-off by experimental results on image generation for CIFAR10 and LSUN-Church dataset~\cite{Yu2015LSUNCO}, with BIGGAN~\cite{Brock2018LargeSG} and StyleGAN2-ADA~\cite{Karras2020TrainingGA} baseline.    
\end{itemize}

\section{Preliminary: Generative Adversial Network }
The generative adversial network is a type of generative model. Denote the source as $X\sim p_X$, the target of GAN is to learn a parametric distribution $\hat{X}\sim q_{\hat{X};\theta}$ to minimize the divergence with the true distribution. The distribution $q_{\hat{X};\theta}$ is a transform $g_{\theta}(Z)$ of random variable $Z$, which is usually sampled from unit Gaussian distribution.
\label{sec:format}
\begin{gather}
    \min_{\theta} d(p_X, q_{\hat{X};\theta}),\notag \\
    \textrm{where } \hat{X} = g_{\theta}(Z), Z\sim \mathcal{N}(0,I).\label{eq:gan}
\end{gather}

Many divergence $d(.,.)$ has the variational form as Eq.~\ref{eq:gan2} (including Jenson-Shannon divergence, Wassertein distance and any $f$-divergence), where $f(.)$ is a function, $h_1(.)$ and $h_2(.)$ are convex transforms \cite{polyanskiy2014lecture,mescheder2017numerics}.
\begin{gather}
    d(p_X,q_{\hat{X}}) = \min_{\theta} \max_{f\in\mathcal{F}} \mathbb{E}_{ q_{\hat{X}}}[h_1(f(\hat{X}))] - \mathbb{E}_{p_X}[h_2(f(X))]. \label{eq:gan2}
\end{gather}

And therefore, the optimization problem in Eq.~\ref{eq:gan} can be achieved by training a discriminator $f(.)$ maximizing Eq.~\ref{eq:gan2}, with generator $g_{\theta}(.)$ minimizing Eq.~\ref{eq:gan2}. 

\section{Related Works }
There are a couple of works on the noise of GAN from different perspectives. \cite{Bailey2018SizeNoiseTI} shows that a higher dimension of noise requires a larger network. \cite{Feng2020UnderstandingNI} analyzes the role of noise by Riemannian geometry. \cite{Manisha2020EffectOI} finds that the input noise dimension of GAN has significant impact on the sample's quality. However, this work is empirical without theoretical analysis. To the best of our knowledge, we are the first to view GAN as a discrete sampler and derive a bound on the noise dimension it requires.

\section{Noise Dimension Required in GAN }
In the majority of previous works on GANs, the noise $Z$ is viewed as continuous without precision limitation. In that case, the dimension of $Z$ is not important. As for infinite precision continuous $Z$, one dimension is enough to model any data. To better understand this, we can think the decoder of a lossless codec (e.g. PNG) as a GAN, and the bitstream as the binary representation of $Z$. As long as $Z$'s precision is not limited, the bitstream can be arbitrarily long. However, this is not true with real-life computers. As we show in this section, as $Z$'s precision is limited, its dimension does matter. And we propose to understand this by viewing GAN as a discrete sampler.

\subsection{Entropy of IEEE 754 Single-Precision Floating Point Gaussian Distribution }
Prior to discussing the GAN as a discrete sampler, we need to first understand the distribution of noise $Z$ as discrete random variable. The majority of GAN defines noise $Z$ as fully factorized Gaussian distribution. As Gaussian distribution is continuous, it does not have entropy (it only has differential entropy). However, in computers, the sample of continuous distribution is represented 
with 32-bit single-precision floating point format (float32).
A float32 is composed of 1 sign bit, 8 power bits and 23 fraction bits. For example, a single dimension sample $z_i$ with value $1.5$ can be represented as
\begin{align}
    z_i = \underbrace{0}_{+1}\underbrace{01111111}_{\times 2^0}\underbrace{00000000000000000000111}_{\times (1 + 1/2)} = 1.5. \notag
\end{align}
To compute the discrete probability of $z_i$, we need to consider the previous float point number and next float point number of it. More specifically, we have
\begin{align}    
    z_{i-1} &= 00111111101111111111111111111111 \notag \\ 
    &= 1.4999998807907104, \\
    z_{i+1} &= 00111111110000000000000000000001 \notag\\ 
    &= 1.5000001192092896.
\end{align}

With $z_i,z_{i-1},z_{i+1}$, we can easily compute the discrete probability mass function of $z_i$ as the integral
\begin{align}
    p(z_i) = \int_{z_i - (z_i - z_{i-1})/2}^{z_i + (z_{i+1} - z_{i}) / 2} \mathcal{N}(0,1) d x  = 1.4178\times 10^{-8}.
\end{align}
By now, we have defined the discrete distribution of single dimension noise $z$. And we can compute the entropy of $z$ by Monte Carlo as
\begin{align}
    H(Z) &\approx \frac{1}{K}\sum_{i=1}^K -\log p(z_i) = 26.55 \textrm{ bits},
\end{align}
which is approximated using $K=10^7$ samples. This means that a 32 bits floating point with single Gaussian distribution can be losslessly compressed into approximately 26.55 bits. As GAN use diagonal Gaussian noise, for multi-dimension $Z$, we can simply multiply the above result with dimension $n$:
\begin{gather}
    H(Z^n) \approx 26.55n \textrm{ bits.} \label{eq:nh}
\end{gather}

\subsection{Entropy of Other-Precision Floating Point }
Similar to 32 bits floating points, the entropy of other-precision floating point can also be estimated and the result is shown in Table.~\ref{tab:ent}. We assume that 32 bits floating point analysis is used in later analysis unless we emphasis otherwise. Note that we only need to replace the $26.55$ constant by the entropy in Table.~\ref{tab:ent} to obtain the results of other precision floating point.
\begin{table}[htb]
\centering
\begin{tabular}{@{}ll@{}}
\toprule
                  & Entropy (bits) \\ \midrule
Floating Point 16 & 11.36                   \\
Floating Point 32 & 26.55                   \\
Floating Point 64 & 55.56                   \\ \bottomrule
\end{tabular}
\caption{The entropy of other-precision floating point.} 
\label{tab:ent}
\end{table}
\subsection{Noise Dimension Required in Perfect GAN }
As we have understood the noise $Z$ in floating point 32 implementation, we proceed to the dimension of noise required to train perfect GAN, or to say, $d(p_X,q_{\hat{X};\theta}) = 0$. We need the following lemma to connect the entropy of $Z$ to the entropy of generated samples $\hat{X}$:
\begin{lemma} (Transform reduces entropy)
\cite{Cover1991ElementsOI} Assume $g_{\theta}(.)$ is a deterministic transform, we have
\label{lemma:01}
\begin{gather}
    H(Z) \ge H(g_{\theta}(Z)) = H(\hat{X}),
\end{gather}
with equality holds iff $g_{\theta}(Z)$ is bijective.
\end{lemma}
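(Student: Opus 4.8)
The plan is to prove the statement in two parts: the inequality $H(Z) \ge H(g_\theta(Z))$ and the equality condition. Since $Z$ is a discrete random variable and $g_\theta$ is a deterministic map, $\hat X = g_\theta(Z)$ is also discrete, so all entropies are ordinary Shannon entropies. First I would observe that the pair $(Z, \hat X)$ is determined by $Z$ alone, because $\hat X$ is a function of $Z$; hence $H(Z, \hat X) = H(Z)$. Then by the chain rule $H(Z, \hat X) = H(\hat X) + H(Z \mid \hat X)$, and since conditional entropy of a discrete variable is nonnegative, $H(Z) = H(\hat X) + H(Z \mid \hat X) \ge H(\hat X)$. This is the cleanest route and avoids any appeal to data-processing inequalities beyond what is elementary.

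For the equality condition, I would note that $H(Z) = H(\hat X)$ holds if and only if $H(Z \mid \hat X) = 0$, i.e. $Z$ is (almost surely) a deterministic function of $\hat X$. Combined with the fact that $\hat X$ is already a deterministic function of $Z$, this means $g_\theta$ restricted to the support of $Z$ is invertible, i.e. bijective onto the support of $\hat X$. Conversely, if $g_\theta$ is bijective on the support, then $Z$ can be recovered from $\hat X$, so $H(Z \mid \hat X) = 0$ and equality holds. One technical point worth stating carefully is that "bijective" here should be understood as injectivity on the support of $Z$ (two noise atoms of positive probability never collapse to the same image); I would make this precise so the iff is literally correct rather than merely morally correct.

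The main obstacle — really the only subtlety — is being careful about the discrete-versus-continuous framing: the lemma as cited from Cover and Thomas is usually stated for discrete variables, and here it is being applied to the float32-quantized Gaussian, which is genuinely discrete, so the elementary proof applies verbatim. I would therefore keep the proof short, invoke only the chain rule and nonnegativity of conditional entropy, and spend the remaining effort on phrasing the equality condition in terms of injectivity on $\mathrm{supp}(Z)$. No heavy machinery is needed; the proof is essentially three lines plus a remark on the support.

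\begin{proof}
Since $g_\theta$ is deterministic, $\hat X = g_\theta(Z)$ is a discrete random variable whenever $Z$ is, and $\hat X$ is a function of $Z$, so $H(\hat X \mid Z) = 0$ and hence $H(Z,\hat X) = H(Z) + H(\hat X \mid Z) = H(Z)$. Applying the chain rule the other way, $H(Z,\hat X) = H(\hat X) + H(Z \mid \hat X)$, so
\begin{gather}
    H(Z) = H(\hat X) + H(Z \mid \hat X) \ge H(\hat X),
\end{gather}
using $H(Z \mid \hat X) \ge 0$. Equality holds iff $H(Z \mid \hat X) = 0$, i.e. $Z$ is almost surely determined by $\hat X$; together with $\hat X = g_\theta(Z)$ this is equivalent to $g_\theta$ being injective on $\mathrm{supp}(Z)$, i.e. bijective onto $\mathrm{supp}(\hat X)$.
\end{proof}
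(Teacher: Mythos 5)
Your proof is correct and is exactly the standard argument behind the result the paper simply cites from Cover and Thomas without proof: write $H(Z)=H(Z,\hat X)=H(\hat X)+H(Z\mid \hat X)$ and use nonnegativity of conditional entropy. Your added precision on the equality condition --- that ``bijective'' must mean injective on $\mathrm{supp}(Z)$ (equivalently, bijective onto $\mathrm{supp}(\hat X)$) rather than bijective on the whole domain --- is a genuine improvement over the paper's looser phrasing, and nothing in the argument is missing.
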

With this lemma, we show that the noise dimension required for perfect GAN is closely connected to the bitrate to losslessly compress the source data:
\begin{theorem} \label{theorem:01} (Noise dimension required for perfect GAN) To achieve perfect divergence $d(p_X,q_{\hat{x};\theta}) = 0$, the minimal dimension of noise $Z$ required is
\begin{gather}
   n \ge \frac{H(X)}{26.55} \ge \frac{\mathbb{E}[\mathcal{L}(X)] - 1}{26.55},
\end{gather}
where $\mathbb{E}[\mathcal{L}(X)]$ is the minimal expected bits required to losslessly encode the source $X$. Further, when $g_{\theta}(.)$ is bijective, 
\begin{gather}
    n \le \frac{H(X)+2}{26.55} \le \frac{\mathbb{E}[\mathcal{L}(X)]+2}{26.55}.
\end{gather}
\end{theorem}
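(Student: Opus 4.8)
The plan is to chain three ingredients: (i) a \emph{perfect} GAN, i.e. $d(p_X,q_{\hat{X};\theta})=0$ in Eq.~\ref{eq:gan}, forces $q_{\hat{X};\theta}=p_X$ so that $\hat X$ and $X$ are identically distributed and hence $H(\hat X)=H(X)$; (ii) Lemma~\ref{lemma:01} relates the noise entropy to the sample entropy, $H(Z^n)\ge H(g_\theta(Z))=H(\hat X)$, with equality exactly when $g_\theta$ is bijective; and (iii) Eq.~\ref{eq:nh}, $H(Z^n)=26.55\,n$, which holds because $Z^n$ is fully factorized so $H(Z^n)=nH(Z)$. The remaining bridge, from $H(X)$ to the operational quantity $\mathbb{E}[\mathcal{L}(X)]$, is Shannon's lossless source coding theorem applied to an optimal symbol code: $H(X)\le\mathbb{E}[\mathcal{L}(X)]<H(X)+1$.

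For the lower bound I would argue as follows. From perfectness, $H(\hat X)=H(X)$; applying Lemma~\ref{lemma:01} to $g_\theta$ gives $H(Z^n)\ge H(X)$; substituting $H(Z^n)=26.55\,n$ yields $n\ge H(X)/26.55$. Then, since $\mathbb{E}[\mathcal{L}(X)]<H(X)+1$ implies $H(X)>\mathbb{E}[\mathcal{L}(X)]-1$, we conclude $n\ge H(X)/26.55\ge(\mathbb{E}[\mathcal{L}(X)]-1)/26.55$, which is the first displayed chain.

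For the upper bound I would use the equality case of Lemma~\ref{lemma:01}: if $g_\theta$ is bijective then $H(Z^n)=H(\hat X)=H(X)$, i.e. $26.55\,n=H(X)$, so $n=H(X)/26.55\le(H(X)+2)/26.55$; combining with the other side of the source coding theorem, $H(X)\le\mathbb{E}[\mathcal{L}(X)]$, gives $n\le(\mathbb{E}[\mathcal{L}(X)]+2)/26.55$, the second displayed chain.

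The step I expect to be the main obstacle is the upper bound, and the subtlety is that taken literally a bijective $g_\theta$ pins $H(Z^n)$ to \emph{exactly} $H(X)$, so a perfect bijective GAN can exist only when $H(X)$ happens to be an integer multiple of the single-coordinate entropy; the slack ``$+2$'' must be read as absorbing (a) the integrality constraint on $n$ and (b) the fact that $26.55$ is a Monte-Carlo estimate of the true per-coordinate float32 Gaussian entropy, so that the stated inequality remains valid without these caveats. A secondary, purely interpretational point is that $\mathcal{L}(X)$ must be the codelength of an optimal (e.g. Huffman) symbol code for the $H\le\mathbb{E}[\mathcal{L}]<H+1$ sandwich to hold, and one should note in passing that $H(X)$ is finite because images are discrete-valued, which is what makes all of the above entropies well defined.
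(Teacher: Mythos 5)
Your lower-bound argument coincides with the paper's: perfect divergence forces $H(\hat{X})=H(X)$, Lemma~\ref{lemma:01} gives $H(Z)\ge H(\hat{X})$, Eq.~\ref{eq:nh} converts $H(Z)$ into $26.55n$, and the optimal-prefix-code sandwich $H(X)\le\mathbb{E}[\mathcal{L}(X)]<H(X)+1$ (Kraft) yields the second inequality. No issues there.

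The upper bound is where you depart from the paper, and where your reconstruction misses the key ingredient. You obtain $n\le(H(X)+2)/26.55$ by noting that bijectivity pins $26.55n=H(X)$ exactly and then treating the ``$+2$'' as slack absorbing the integrality of $n$ and the Monte-Carlo error in the constant $26.55$. That is not where the ``$+2$'' comes from. The paper invokes the Knuth--Yao theorem on nonuniform random number generation \cite{Knuth1976TheCO}: any discrete distribution can be generated exactly from a stream of fair bits of expected length at most $H(X)+2$. One then uses the (roughly $26.55n$-bit) lossless code of $Z^n$ as that bit stream, so a perfect generator exists as soon as $26.55n$ reaches about $H(X)+2$. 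The upper bound is thus an \emph{achievability} statement --- some generator with this few noise dimensions attains $d=0$ --- and the ``$+2$'' is the Knuth--Yao constant, not numerical slack. Your version, read literally, only asserts that \emph{if} a bijective perfect generator exists \emph{then} its $n$ satisfies the bound; as you yourself observe, that antecedent essentially never holds (it requires $H(X)$ to be an exact multiple of $26.55$), so your argument establishes the inequality only vacuously and says nothing about what noise dimension actually suffices. (To be fair, the paper's own handling of the bijectivity hypothesis in this step is also loose --- the Knuth--Yao generator is not bijective in general --- but the substantive content of the bound is the random-bit-generation argument, which your proof omits.)
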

\begin{proof}
    The perfect divergence $d(p_X,q_{\hat{x};\theta}) = 0$ implies that $H(X) = H(\hat{X})$. Using Lemma.~\ref{lemma:01}, we know that $H(Z) \ge H(\hat{X}) = H(X)$. From Eq.~\ref{eq:nh}, we know that $26.55n\ge H(X)$. From Kraft inequality \cite{Cover1991ElementsOI}, we have $H(X)\ge\mathbb{E}[\mathcal{L}(X)] - 1$. When $g_{\theta}(.)$ is bijective, using Lemma.~\ref{lemma:01}, we know that $H(Z) = H(\hat{X})$. Then by Yao's theorem on random number generator \cite{Knuth1976TheCO}, we know that there exist a sequence of binary random variable with expected length $\le H(X)+2$ that generates $X$. Then we can simply use the lossless code of $Z$ as this binary random variable sequence to generate $X$. And the length of the code is related to the entropy by Kraft inequality. 
\end{proof}
In other words, the noise dimension required is at least the minimal bitrate to losslessly compress the source data divide by $26.55$. In practice, the ideal lossless compressor that achieves minimal expected bits $\mathbb{E}[\mathcal{L}(X)]$ might not exist. Therefore, we use the rate of practical lossless coders (e.g. PNG) as an approximation to $\mathbb{E}[\mathcal{L}(X)]$. 
\subsection{Noise Dimension Required in Non-Perfect GAN }
In previous section, we have discussed the noise dimension required for perfect GAN. In this section, we generalize this result to non-perfect GAN. In other words, we study the best divergence that can be achieved when the entropy is limited.

\textbf{Divergence-Entropy Trade-off:} To achieve this target, we first propose divergence-entropy function as follows
\begin{gather}
    d(\epsilon) = \min_{q_{\hat{X}}} d(p_X, q_{\hat{X}})\textrm{, s.t. } H(\hat{X}) \le \epsilon,\notag \\
    \textrm{where } \epsilon \approx 26.55n
\end{gather}

The $d(\epsilon)$ function is the best divergence can be achieved by any generative model $q_{\hat{X}}$ when the entropy $H(\hat{X})$ is limited. Thus, it describes the best divergence of GAN when noise dimension is limited. For example, consider a DC-GAN with Jensen-Shannon divergence as $d(.,.)$. Then with a given noise dimension $n$, we can compute $\epsilon$, and then the best divergence can be achieved is just $d(\epsilon)$. And as we show later, $d(\epsilon)$ can be numerically solved, when the source $p_X$ is known. 

Finally, the $d(\epsilon)$ function has several obvious properties, we list them here without proof:
\begin{itemize}
    \item $d(\epsilon)$ is monotonously non-increasing in $\epsilon$.
    \item If $\epsilon \ge H(X)$, $d(\epsilon) = 0$. This is the conclusion from Theorem~\ref{theorem:01}.
    \item If $\epsilon = 0$, $q_{\hat{X}}(\hat{X}=\hat{x}) = \left\{ 
    \begin{array}{lc}
        1 & \hat{x} = \arg\max p_X \\
        0 & \hat{x} \neq \arg\max p_X \\
    \end{array}
\right.$

\end{itemize}
\textbf{Numeric Solution with Known $p_X$:}
Similar to rate-distortion function $R(D)$, the divergence-entropy function $d(\epsilon)$ can also be solved numerically when $p_X$ is known. However, the Blahut–Arimoto (BA) algorithm \cite{blahut1972computation} is no longer usable. This is because BA algorithm requires both the target and constraints to be convex. While in our case, when the divergence is convex in second argument (which is true for Jenson-Shannon divergence, Wassertein distance and any $f$-divergence), the target $d(p_X,q_{\hat{X}})$ is convex in $q_{\hat{X}}$. However, the constraint $H(\hat{X})\le\epsilon$ is concave in $q_{\hat{X}}$.

Therefore, we resort to disciplined convex-concave program (DCCP) method, which is designated to solve the nonconvex problems. More specifically, a disciplined convex-concave program has the form:
\begin{gather}
    \min \textrm{or} \max o(y) \notag \quad
    \textrm{s.t. } l_i(y) \sim r_i(y) \textrm{, } i=1,...,m, 
\end{gather}
where $y$ is the variable, $o(.)$ is the convex or convcave objective, $l_i(.),r_i(.)$ are convex functions or concave functions and $\sim$ is one of the relational operator $=,\le,\ge$. Obviously, DCCP is more general than convex optimization. In our case, we can formulate the $d(\epsilon)$ function as follows:
\begin{itemize}
    \item (Variable) $q_X(x), x\in\mathcal{X}$, where $\mathcal{X}$ is alphabet of $X$.
    \item (Objective)
    $\min o{(q_X(x))}=\sum_{x\in\mathcal{X}} p_X(x) \log \frac{p_X(x)}{q_X(x)}$
    \item (Probability Constraints) $\forall x\in\mathcal{X}, 0 \le q_X(x)$, and $\sum_{x\in\mathcal{X}} q_X(x)=1$, which means that
    \begin{gather}
        l_i = 0 \le r_i(q_X(x)) = q_X(x), i=1,...,|\mathcal{X}|,\notag \\
        l_{|\mathcal{X}|+1}({q_X(x)}) = \sum_{x\in\mathcal{X}} q_X(x) = r_{|\mathcal{X}|+1} = 1. \notag
    \end{gather}
    \item (Entropy Constraint)
    \begin{gather}
    l_{|\mathcal{X}|+2} = \epsilon \ge r_{|\mathcal{X}|+2}(q_X(x))=\sum_{x\in\mathcal{X}} -q_X(x) \log q_X(x) \notag 
    \end{gather}
\end{itemize}
In general, when the source $p_X$ is known, $d(\epsilon)$ can be solved numerically as a $|\mathcal{X}|$ dimension DCCP problem with $|\mathcal{X}|+2$ constraints.

\textbf{An Example} We implement the numerical solver of $d(\epsilon)$ in DCCP which is the extension of pyCVX \cite{diamond2016cvxpy}. Here, we provide a toy size example, where the source distribution is a $7$ dimensional categorical distribution. The probability of each class is shown in Fig.~\ref{fig:mni_toy}.(a). The entropy for this distribution is $H(X)= 1.857$. We traverse $\epsilon$ from $0.05$ to $2.00$ and using our solver to solve $d(\epsilon)$ as Fig.~\ref{fig:mni_toy}.(b). As shown in this figure, the $d(\epsilon)$ is monotonously non-increasing, and reaches $0$ as $\epsilon \ge H(X)$ as we expected.
\begin{figure}[thb]
\centering
 \includegraphics[width=\linewidth]{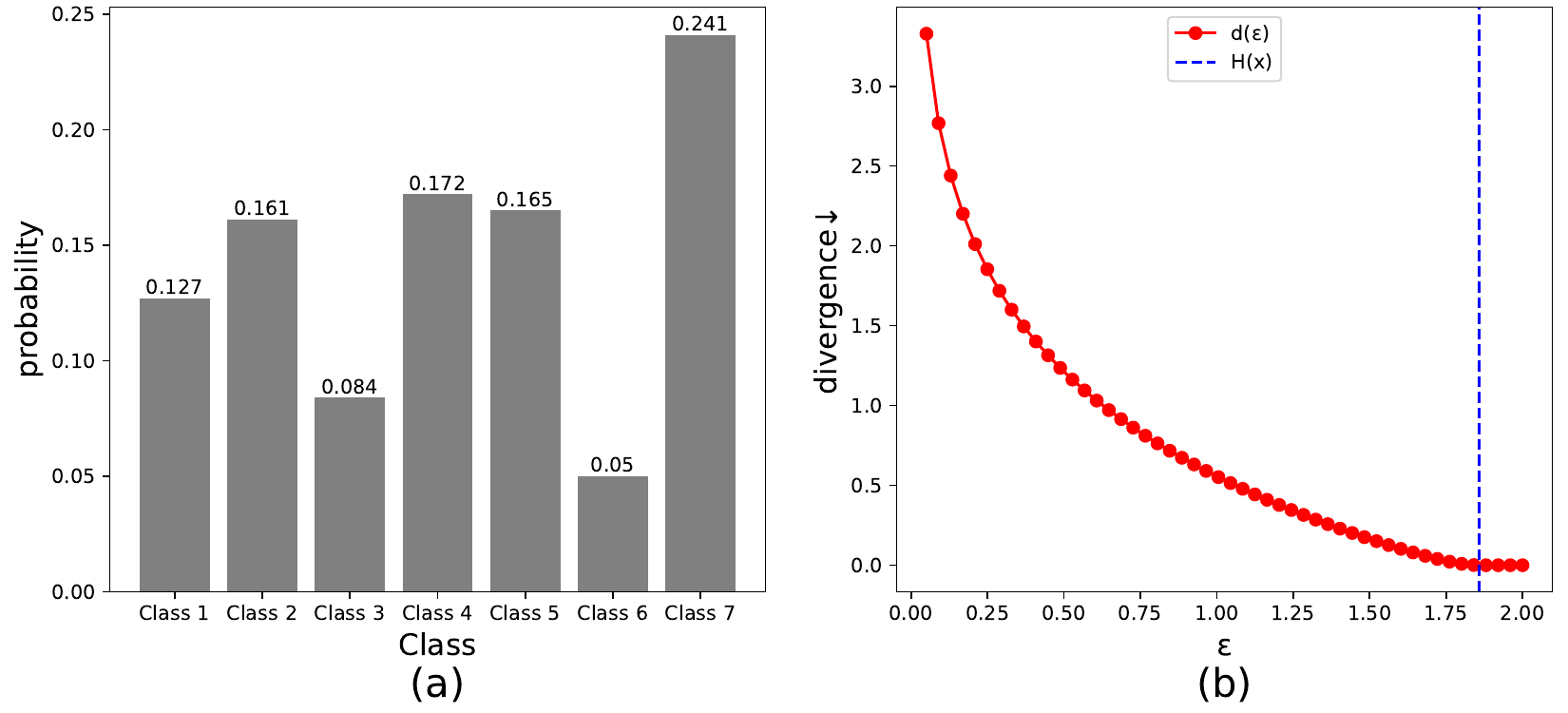}
 \caption{(a) Source distribution. (b) $d(\epsilon)$ curve.}
 \label{fig:mni_toy}
\end{figure}

\section{Experiments}
\begin{figure}[htbp]
\centering
 \includegraphics[width=\linewidth]{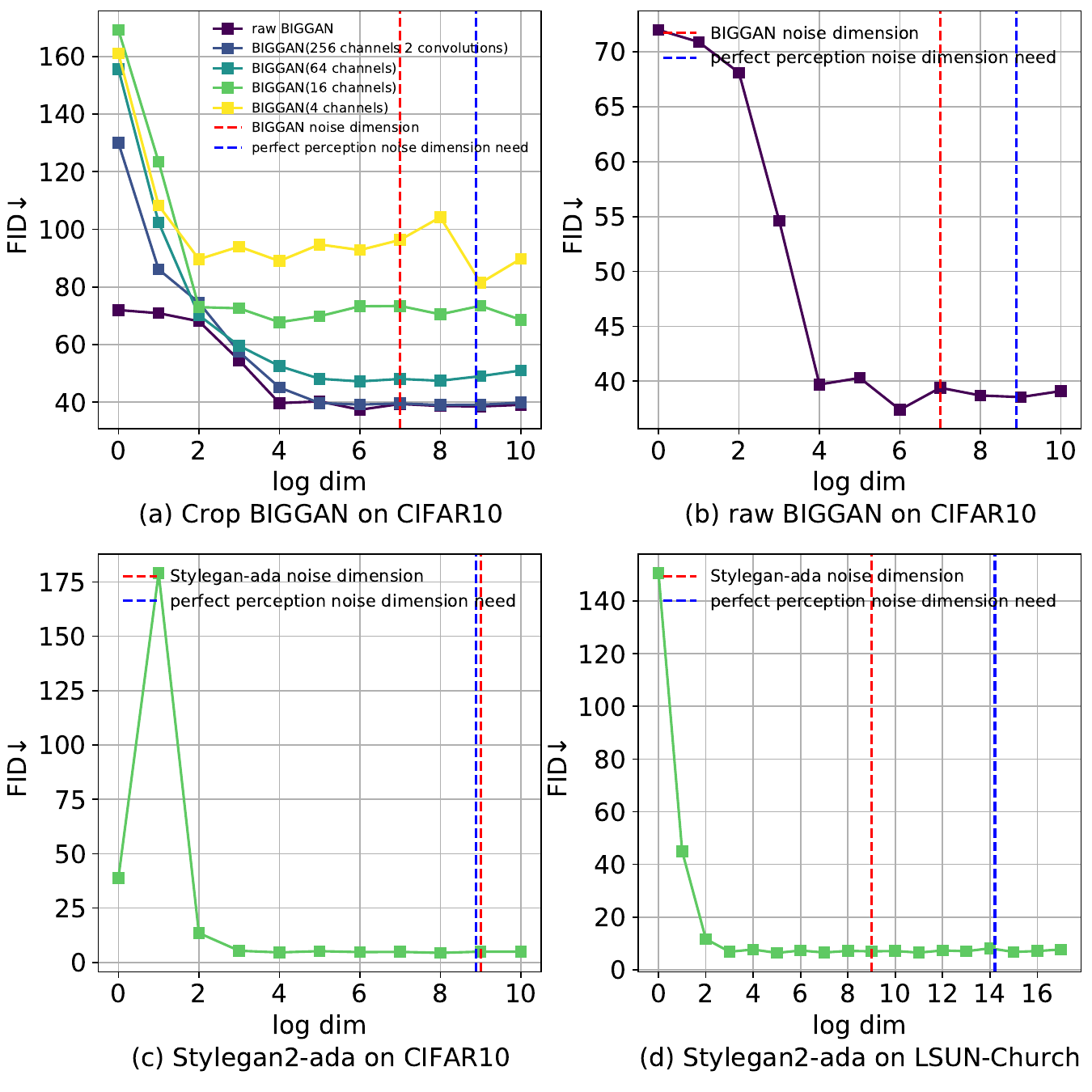}
 \caption{The result of FID on CIFAR10 and LSUN-Church.}
 \label{fig:mni}
\end{figure}

\begin{figure}[htbp]
\centering
 \includegraphics[width=\linewidth]{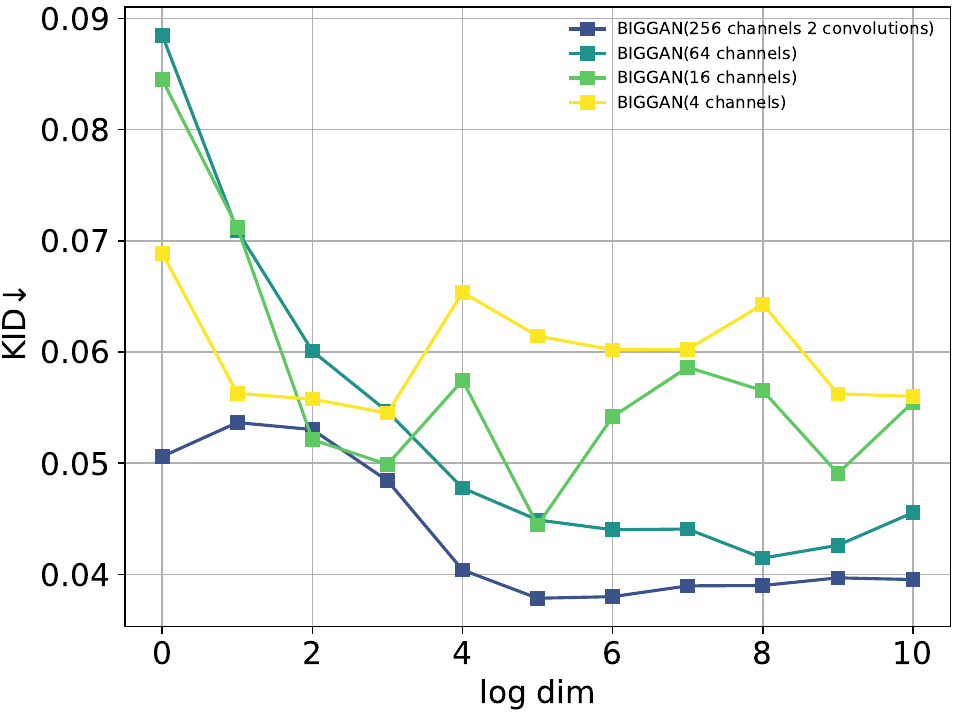}
 \caption{BIGGAN's KID on CIFAR10.}
 \label{fig:mni}
\end{figure}

\begin{figure}[htbp]
\centering
 \includegraphics[width=\linewidth]{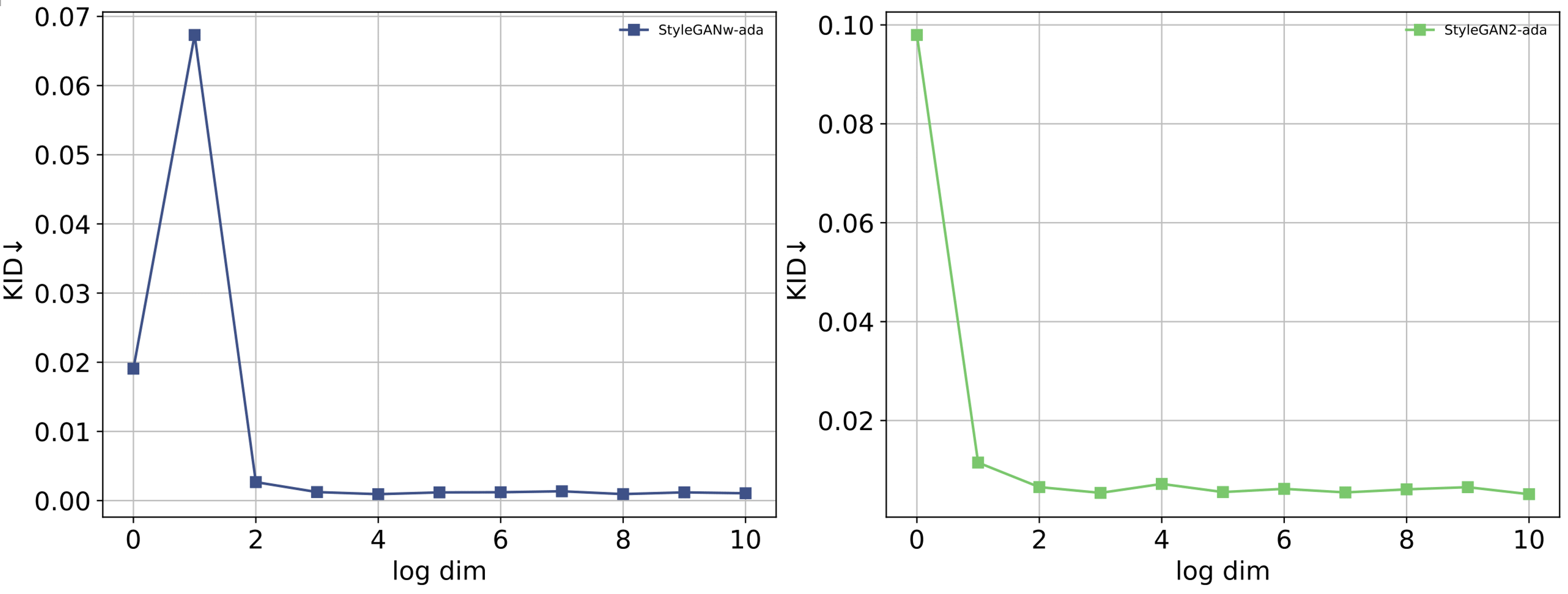}
 \caption{StyleGAN-ada's KID on CIFAR10 and LSUN-Church.}
 \label{fig:mni}
\end{figure}

\label{sec:majhead}
In this section, we empirically verify that the divergence-entropy trade-off exists by studying the behaviour of GAN when noise dimension is limited. NVIDIA GPUs have more optimized support for single-precision (FP32) calculations. Double-precision (FP64) calculations might be slower and hard to use. Thus, we do experiments with Single-precision float points setting.
\subsection{Experiment Setup}
\label{ssec:subhead} 
\textbf{Dataset} We choose CIFAR10~\cite{Krizhevsky2009LearningML} and LSUN-Church~\cite{Yu2015LSUNCO} datasets, which are widely adopted in image generation.

\noindent\textbf{Metrics} We choose Fréchet inception distance (FID) and Kernel inception distance (KID) as the metric to evaluate $d(.,.)$, which is widely adopted in GAN.

\noindent\textbf{Baselines and Training} We use BIGGAN \cite{Brock2018LargeSG} and StyleGAN2-ADA~\cite{Karras2020TrainingGA} as baselines. We train BIGGAN and StyleGAN2-ADA on CIFAR10 and LSUN-Church datasets with training details same with the original papers. We vary the input noise dimension $n$ and evaluate the resulting FID. For BIGGAN, we also attempts to reduce the network capacity, and observe the noise dimension $n$ required to achieve minimal FID.



\begin{table}[t]
\centering
\begin{tabular}{@{}lll@{}}
\toprule
        & CIFAR10    & LSUN-Church  \\ \midrule
PNG     & 2271 Bytes & 103897 Bytes \\
WebP    & 1807 Bytes & 69246 Bytes  \\
JPEG-XL & 1576 Bytes & 62940 Bytes  \\ \bottomrule
\end{tabular}
\caption{Average Bytes that different lossless codec need to compress images} \label{tab:cap}
\end{table}

\begin{table}[t]
\centering
\begin{tabular}{@{}lll@{}}
\toprule
                  & CIFAR10 & LSUN-Church \\ \midrule
Floating Point 16 & 1110    & 44324       \\
Floating Point 32 & 475     & 18966       \\
Floating Point 64 & 227     & 9063        \\ \bottomrule
\end{tabular}
\caption{Noise Dimension Perfect GAN need} 
\end{table}

\subsection{Results and Analysis}
\label{sssec:subsubhead}

\begin{figure}[htbp]
\centering
 \includegraphics[width=\linewidth]{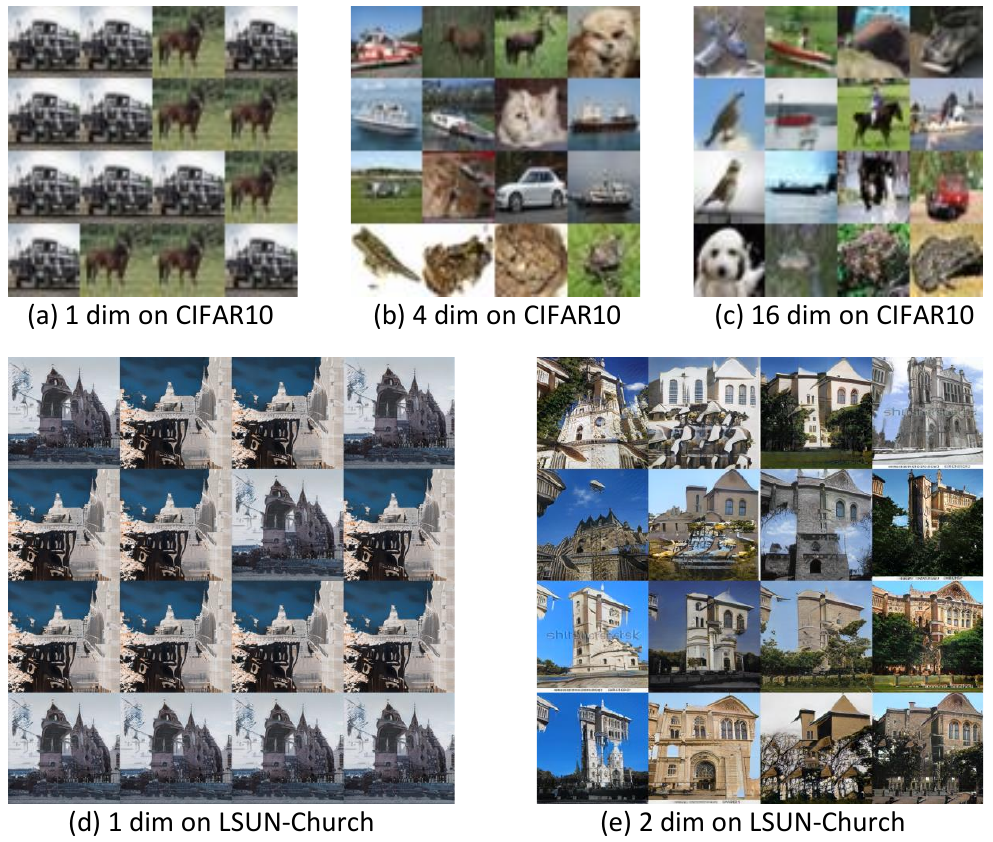}
 \caption{Samples using different noise dim.}
 \label{fig:sam}
\end{figure}

We use practical lossless coders to approximate $\mathbb{E}[\mathcal{L}(X)]$. Table 2 show the average bytes of CIFAR10 and LSUN-Church datasets compressed with different lossless codecs. From Table 2, JPEG XL achieves the best compression ratio, which approximates that the minimal noise dimension for CIFAR10 is 475, for LSUN-Church is 18966.

We evaluate BIGGAN and StyleGAN-ADA with reduced noise dimension, as shown in Fig.~\ref{fig:mni}(b)(c)(d). It is shown that as the noise dimension reduces, the FID goes up. This verifies the existence of divergence-entropy trade-off. However, the noise dimension $n$ with a reasonable low FID is smaller than the minimal noise dimension $n$ we computed. This is because the minimal FID that can be achieved is not $0$, and it is limited by network capacity. To further verify this, we reduce the network capacity of BIGGAN and shows the noise dimension and FID in Fig.2(a). As the network goes smaller, the minimal FID goes up, and the noise dimension with lowest FID also decreases. The KID results in Fig.3 and Fig.4 show similar results. Further, we present the images generated with different noise dimensions in Fig.5. Obviously, GAN with limited noise dimension has limited sample diversity. We find that GAN especially with one dimension noise can only generate a few categories of images.

\section{Conclusion}
In this paper, we propose to view GAN as a discrete sampler. By that, we connect the lowerbound on the noise dimension required for GAN with the bitrate to compress the source data losslessly. We further propose divergence-entropy trade-off, which depicts the best divergence of GAN when noise is limited. We propose a numerical approach to solve the divergence-entropy trade-off when the source distribution is known. Empirically, we verify the existence of this trade-off by the experiment of GAN.

\newpage

\bibliographystyle{IEEEbib}
\bibliography{icme2023template}

\end{document}